\newcommand{\atm}{\mathit{Atm}}
\newcommand{\dec}{\mathit{Dec}}
\newcommand{\val}{\mathit{Val}}
\newcommand{\finsubseteq}{\subseteq^{\mathrm{fin}} \hspace{-0.05cm}}
\newcommand{\tagLabel}[2]{\tag{\textbf{#1}}\label{#2}}
\renewcommand{\phi}{\varphi}
\newcommand{\allins}{[\emptyset]}
\newcommand{\someins}{\langle \emptyset \rangle}
\newcommand{\bcl}{\textsf{BCL}}
\newcommand{\pimp}{\text{$\mathtt{PImp}$}}
\newcommand{\imp}{\text{$\mathtt{Imp}$}}
\newcommand{\vx}{\mathsf{t}(x)}
\newcommand{\vy}{\mathsf{t}(y)}
\newcommand{\takevalue}[1]{\mathsf{t}({#1})}
\newcommand{\axp}{\textsf{$\mathtt{AXp}$}}
\newcommand{\waxp}{\textsf{$\mathtt{wAXp}$ }}
\newcommand{\cxp}{\textsf{$\mathtt{CXp}$}}
\newcommand{\wcxp}{\textsf{$\mathtt{wCXp}$ }}
\newcommand{\putaway}[1]{}
\newcommand{\conj}[2]{\mathsf{cn}_{#1{,}#2}}
\newcommand{\plt}{\mathit{Plt}}
\newcommand{\dfd}{\mathit{Dfd}}
\newcommand{\cb}{\mathit{CB}}
\newtheorem{definition}{Definition}
\newtheorem{proposition}{Proposition}
\newtheorem{theorem}{Theorem}
\newtheorem{corollary}{Corollary}
\newtheorem{fact}{Fact}
\newtheorem{example}{Example}
\begin{document}

\title{Bridging Case-based Reasoning in Law and Reasoning about Classifiers }



\author{Xinghan Liu$^{1}$, 
Emiliano Lorini$^{1}$, Antonino Rotolo$^{2}$ and Giovanni Sartor$^{2}$\\
\small $^{1}$IRIT-CNRS, University of Toulouse, France \\
\small $^{2}$CIRSFID, University of Bologna, Italy
}
\date{}

\maketitle

\begin{abstract}
This paper brings together two lines of research: factor-based models of case-based reasoning (CBR) and the logical specification of classifiers. 
Logical approaches to classifiers capture the connection between features and outcomes in classifier systems.
Factor-based  reasoning 
is a popular approach to reasoning by precedent in AI \& law. Horty (2011) has developed the factor-based models of precedent into a theory of precedential constraint.
In this paper we combine
the modal logic approach (binary-input classifier logic, BCL) to classifiers 
and their explanations given 
by Liu \& Lorini (2021)
with Horty's account of factor-based CBR, since
both a classifier and CBR map sets of features
to decisions or classifications.
We reformulate case bases of Horty in the language of BCL, and give several representation results. 
Furthermore, we show how notions of CBR, e.g. reason, preference between reasons, can be analyzed by notions of classifier explanation.
\end{abstract}


\section{Introduction}
This paper brings together two lines of research: the logical specification of classifiers and factor-based models of case-based reasoning (CBR). 

Logical approaches to classifiers capture 
the connection between features and outcomes in classifier systems.
They are well-suited for modeling and computing 
a large variety of explanations of a classifier's decision
\cite{shih2018formal,DBLP:conf/ecai/DarwicheH20,ignatiev2019abduction,ignatiev2020contrastive,audemard2021computational,LiuLorini2021BCL}, e.g., prime implicants, abductive, contrastive and counterfactual explanations. 
Consequently, they enable detecting biases
and discrimination in the classification process. They can thus contribute to provide controllability and explainability over automated decisions (as required, e.g., by art. 22 GDPR and by Art. 6 ECHR relative to judicial decisions).


Factor-based reasoning as provided by HYPO and CATO \cite{Ashley1990ML,Aleven2003UB} has been a most popular approach to precedential reasoning within AI \& law research. The key idea is that a case can be represented as a set of factors, where a factor is a legally relevant aspect. Factors are assumed to have a direction, i.e., to favor certain outcomes. Usually both factors and outcomes are assumed to be binary, so that each factor can be labelled with the outcome it favors (usually denoted as $\pi$,  the outcome requested by the plaintiff, and $\delta$, the outcome requested by the defendant). The party which is interested in a certain outcome in a new case can support her request by citing a past case that has the same outcome, and shares with the new case some factors supporting that outcome. The party that is interested in countering that outcome can respond with a distinction, i.e., can argue that some factors which supported that outcome in the precedent are missing in the new case or that some additional factors against that outcome are present in the new case.
In a series of contributions, John Horty \cite{Horty2004RM,HortyBench-Capon2012FB} has developed the factor-based models of precedent into a theory of precedential constraints, i.e. of how a new case must be decided, in order to preserve consistency in the case law.
In \cite{Horty2011RR,Horty2017Re} he takes into account the fact that judges may also provide explicit reasons for their choice of a certain outcome. This leads to the distinction between the result and the reason model of precedents. In the first model, the message conveyed by the case is only that  all factors supporting the case-outcome (pro-factors) outweigh all factors against that outcome (con-factors). In the second, the message is rather that the  factors for the case outcome is indicated by the judge (that may be a strict subset of the set of pro-factors) outweigh all factors against that outcome.

In this paper we shall  combine
the modal logic approach to classifiers 
and their explanations given 
in 
\cite{LiuLorini2021BCL}
with Horty's account of factor-based CBR. 
The combination is based on the fact that 
both a classifier and CBR map sets of features
to decisions or classifications. 
In this way, our contribution is at least twofold.

First, we explore the formal relation between two apparently unrelated reasoning systems. While the connection between CBR and reasoning about classifier systems is of interest in itself, we believe that, through this relation, new research perspectives can be offered, since we could in the future investigate CBR by exploiting several techniques and results from modal logic. We will see that the challenge of this paper is to adapt the formal representation of a classifier to the bidirectionality of factors in the HYPO model. Once this is solved, we can provide a logical model and a formal semantics for factor-based CBR.

Second, we contribute to investigating the idea of normative explanation. Indeed, while the philosophical literature on the concept of explanation is immense, the AI community is now paying more and more attention to it due to the development of explainable AI (XAI) \cite{XAI}. The AI\&Law community has a long tradition in this direction \cite{ATKINSON2020103387}, since transparency and justification of legal decision-making requires formalizing normative explanations. Our paper, by conncecting CBR and reasoning about classifier systems, allows for exploring different notions of explanation in law, such as abductive and contrastive explanations for the outcome suggested by the case-based reasoner. Our model could be used to build explainable case-based reasoners, which could also be deployed to reproduce and analyze the functioning of opaque predicators of the outcome of cases. We import notions such as prime implicant and contrastive explanation in the domain of XAI for classifiers to showcase how to analyze CBR in the field of XAI.
 
 \paragraph{Paper outline} 
 Section 2 presents Horty's two models of case-based reasoning (CBR). Section 3 introduces the notions of classifier model (CM) for the binary-input classifier logic ($\bcl$). Section 4 studies the connection between CBR and classifier models: we show a case base is consitent if and only if its translation into the logic $\bcl$ is satisfiable in a certain class of classifier models. In Section 5 we will instantiate that notions for classifier explanation in XAI help study case base. Finally, Section 6 discusses related work and concludes. Proofs and the axiomatics of $\bcl$ are in the appendix.

 \section{Horty's Two Models of Case-Based Reasoning}
In this section we account for the two models of case-based reasoning / precedential constraint proposed by Horty (in our language of symbols).
We simply say \emph{result model} for ``the factor-based result model of precedential constraint'' and  \emph{reason model} for ``the factor-based reason model of precedential constraint''.

\subsection{Basic notions and notations}

Let $\atm_0 = \plt \cup \dfd$, where $\plt$ and $\dfd$ are disjoint sets of factors favoring the plaintiff and defendant respectively.
In addition, let $\val = \{1, 0, ?\}$ where elements stand for \emph{plaintiff wins}, \emph{defendant wins} and \emph{indeterminacy} respectively.
Let $\dec = \{\takevalue{x}: x \in \val\}$ and read $\takevalue{x}$ as ``the actual decision/outcome (of the judge/classifier) takes value $x$''. An outcome  $\takevalue{1}$ or $\takevalue{0}$ means that, the judge is predicted to decide for the plaintiff or for the defendant (the classifies ``forces'' one of the two outcomes). The outcome  $\takevalue{?}$ means either outcome would be consistent:  the judge may develop the law in one direction or the other.
We use $\atm$ to denote $\atm_0 \cup \dec$.

We call $s \subseteq \atm_0$ a \emph{fact situation}.
A set of atoms $X$ is called a \emph{reason} for an outcome (decision) $x$ if it a set of factors all favoring the same outcome:  $X \subseteq \plt$ is a reason for $1$ and  $X \subseteq \dfd$ is a reason for $0$. A (defeasible) \emph{rule} consist of a reason and the corresponding outcome: 
$X \mapsto x$ is rule, if $X \subseteq\plt$ and $x = 1$, or $X \subseteq \dfd$ and $x = 0$.
For readability, we make a convention that, for $x \in \{0, 1\}$, let $\overline{x} = 1 - x$ and $\overline{\overline{x}} = x$.
Moreover, let $\atm_0^x = \plt$ if $x = 1$, and $\atm_0^x = \dfd$ if $x = 0$.

In the reason model, a \emph{precedent case} (precedent) is a triple $c = (s, X, x)$, where $s \subseteq \atm_0$, $X\subseteq \atm_0^x, x \in \{0, 1\}$.
In plain words, $s \cap \atm_0^x$ contains all \emph{pro-factors} in $s$ for $x$, while $s \cap \atm_0^{\overline{x}}$ all \emph{con-factors} in $s$ for $x$.
$X$ is the \emph{reason of the case}, namely a subset of the pro-factors which the judge consider sufficient to support that outcome, relative to all con-factors in the case. 

A \emph{case base} $\cb$ (for reason model) is a set of precedential cases.
When the reason contains all pro-factors within the situation   (i.e., when $c = (s, s \cap \atm_0^x, x)$) all such factors are considered equally decisive.  If a case base only contains cases of this type,  we obtain what Horty calls ``the result model'', and note such a case base $\cb^{res}$. \footnote{So we view result model as a special kind of reason model, as \cite[p. 25]{Horty2011RR} also mentioned. } 
The class of all $\cb s$ and $\cb^{\mathrm{res}}s$ are noted $\mathbf{CB}$ and $\mathbf{CB}^{\mathrm{res}}$ respectively.



\begin{example}[Running example]
Throughout the paper we will refer to the following running example taken from \cite{Prakken2021FA}. 
Let us assume  the following six factors, each of which either  favors the outcome `misuse of trade secrets' (`the
plaintiff wins') or rather favors the outcome ‘no misuse of trade secrets’ (`the defendant wins'):
the defendant had obtained the secret by deceiving the plaintiff ($\pi_1$) or by bribing
an employee of the plaintiff ($\pi_2$), the plaintiff had taken security measures to
keep the secret ($\pi_3$) , the information is obtainable elsewhere ($\delta_1$), the product is
reverse-engineerable ($\delta_2$) and the plaintiff had voluntarily disclosed the secret to
outsiders ($\delta_3$). 
Hence in our running example $\atm =\{\pi_1,\pi_2,\pi_3,\delta_1, \delta_2, \delta_3, \takevalue{0}, \takevalue{1}, \takevalue{?}\}$
%
%
Let us consider a case base $\cb^{ex} =\{c_1, c_2)$ where $c_1=(\{\pi_1, \pi_3,\delta_1,\delta_3\}, \{\pi_1\}, 1);$ $c_2=(\{\pi_2,\delta_1,\delta_3\}, \{\pi_1\},0)$, which means:
\begin{itemize}
\item $c_1$ has factors (fact situation) $s_1 = \{\pi_1, \pi_3,\delta_1,\delta_3\}$, reason $\{\pi_1\}$ and outcome $1$;
\item $c_2$ has outcome $\delta$, factors $s_2 = \{\pi_2,\delta_1,\delta_3\}$, reason $\{\delta_3\}$ and outcome $0$

\end{itemize}
\end{example}





\subsection{Consistent case base and precedential constraint for update}
A case base can be inconsistent, when two precedents
map the same fact situation to different outcomes.
Another scenario is that a consistent case base becomes inconsistent after \emph{update}, namely after expanding it with some new case.
Hence maintaining consistency is the crucial concern of case-based reasoning.
But first of all, one need define these notions.
The following definitions, except symbolic difference, are based on \cite{Horty2011RR,Prakken2021FA}.

\begin{definition}[Preference relation derived from a case]\label{def:pref case}
    Let $c = (s, X, x)$ be a case. Then the preference relation $<_c$   derived from $c$ is s.t. for any two reasons $Y$, $Y'$ favoring $x$ and $\overline{x}$ respectively, $Y' <_c Y$ if and only if $Y' \subseteq s \cap \atm_0^{\overline{x}}$ and $X \subseteq Y$.
\end{definition}

\begin{definition}[Preference relation derived from a case base]\label{def:pref case base}
    Let $\cb$ be a case base. Then  the preference relation $<_{\cb}$  derived from $\cb$ is s.t. for any two reasons $Y, Y'$ favoring $x$ and $\overline{x}$ respectively, $Y' <_{\cb} Y$ if and only if $\exists c \in \cb$ s.t. $Y' <_c Y$.
\end{definition}

\begin{definition}[(In)consistency]\label{def:Inconsist case base}
    A case base $\cb$ is inconsistent, if there are two reasons $Y, Y'$ s.t. $Y' <_{\cb} Y$ and $Y <_{\cb} Y'$. $\cb$ is consistent if it is not inconsistent.
\end{definition}

\begin{definition}[Precedential constraint]
    Let $\cb$ be a consistent case base, $X$ is a reason for $x$ in $\cb$ and applicable in a new fact situation $s'$, i.e. $X \subseteq s'$. Then updating $\cb$ with the new case $(s', X, x)$ meets the precedential constraint, if and only if $\cb \cup \{(s', X, x)\}$ is still consistent.
\end{definition}

There is more than one way to satisfy the precedential constraint, depending on how the precedents in $CB$ interacts with the new case.
The requirement of consistency dictates the outcome when the \emph{a fortiori} constraint applies: if reason $X$ for $x$ outweighs (i.e., is stronger than) reason $s \cap \atm_0^{\overline{x}}$ for $\overline{x}$, a fortiori any superset of $X$ outweighs any subset of $s \cap \atm_0^{\overline{x}}$, so that only by deciding $x$ rather than $\overline{x}$ is consistency maintained.\footnote{We generalize a fortiori constraint from only working for result model in \cite{Horty2011RR} to also for reason model. }

\begin{example}[Running example]
Let us consider two fact situations according to case base $CB^{ex}$ running example.
%

\begin{itemize}
\item \label{example:begin}
     In $s_3 = \{\pi_1, \pi_3, \delta_1\}$, only a decision for $1$ in $s_3$ is consistent with  $CB^{ex}$, since a decision for $0$ would entail that  $\{\delta_1\} >_{\cb^{ex}} \{\pi_1\}$, contrary to the preference $\{\pi_1\} >_{\cb^{ex}} \{\delta_1\}$, which is derivable from $c_1$. 
\item \label{example:begin1}
     In $s_4 = \{\pi_2, \delta_2\}$ both  $(s_4, \{\pi_2\},1)$ and $(s_4, \{\delta_2\},0)$ are consistent with $CB^{ex}$, since neither $\{\pi_2\}>_{\cb^{ex}}\{\delta_2\}$ nor $\{\delta_2\}>_{\cb^{ex}}\{\pi_2\}$.
\end{itemize}
\end{example}
\section{Classifier model of binary-input classifier logic}
In this section we introduce the language and semantics of binary-input classifier logic $\bcl$ first appeared in \cite{LiuLorini2021BCL}.
Recall that $\atm = \atm_0 \cup \dec$, where $\atm_0 = \dfd \cup \plt$, and $\dec = \{\takevalue{x}: x \in \val = \{0, 1, ?\} \}$.
The modal language  $\mathcal{L} (\mathit{Atm})$ of $\bcl$ is defined 
as:
\begin{center}\begin{tabular}{lcl}
 $\varphi$  & $::=$ & $ p \mid \takevalue{x} \mid
  \neg\varphi \mid \varphi \wedge\varphi \mid [X]\varphi,$
\end{tabular}\end{center}
where $p$ ranges over $\atm_0$, $\takevalue{x}$ ranges over $\dec$,
and $X$ is a finite subset of $\atm_0$.\footnote{$\atm$ is finite since the factors in case-based reasoning are supposed to be finite. Notice $p$ ranging over $\dfd \cup \plt$, i.e. $p$ can be some $\delta$ or some $\pi$.
$X$ can denote a reason (an exclusive set of plaintiff/defendant factors), or any subset of $\atm_0$, which is clear from the context.
Last but not least, $p$ and $\takevalue{x}$ have different statuses regarding negation: $\neg p$ means that the input variable $p$ takes value $0$, but $\neg \takevalue{x}$ merely means the output does not take value $x$: we do not know which value it takes, since the output is trinary.
}
Operator
$\langle X \rangle$
is the dual
of 
$[ X ]$
and is defined as
usual:
$\langle X \rangle \varphi =_{\mathit{def}} \neg [ X ] \neg \varphi$.
Finally, for any $X \subseteq Y \subseteq \atm_0$, the following definition syntactically expresses a valuation on $Y$ s.t. all variables in $X$ are assigned as true, while all the rest in $Y$ are false.
\begin{align*}
    \conj{X}{Y} =_{\textit{def}} \bigwedge_{p \in X } p \wedge \bigwedge_{p \in Y \setminus X} \neg p.
\end{align*}

The language 
 $\mathcal{L} (\mathit{Atm})$
is interpreted relative 
to classifier models 
defined as follows. 
\begin{definition}[Classifier model]\label{Def:ModelAltern}
	A classifier  model (CM) is a pair $C= (S, f )$ where:
	\begin{itemize}
	\item  $S \subseteq 2^{\atm_0}$
		is a set of states (or fact situations), and
	\item $f: S \longrightarrow \mathit{Val}$ is a decision (or classification) function.
	\end{itemize}
The class of classifier
models is noted
$\mathbf{CM}$.
\end{definition}

A pointed classifier model is a pair $(C,s)$ with $C= (S, f)$ a classifier model and $s \in S$.
Formulas in $\mathcal{L} (\mathit{Atm})$ are interpreted relative to a pointed  classifier model, as follows.

\begin{definition}[Satisfaction relation]\label{truthcondCM}
	Let $(C,s)$
	be a pointed  classifier model with
	$C= (S,f)$ and $s \in S$. Then:
	\begin{eqnarray*}
		(C, s) \models p & \Longleftrightarrow & p \in s , \\
		(C, s) \models \mathsf{t} (x)
	& \Longleftrightarrow & f(s)=x,\\
				(C, s) \models \neg \varphi & \Longleftrightarrow &
				(C, s) \not \models \varphi ,\\
								(C, s) \models 
								 \varphi \wedge \psi & \Longleftrightarrow &
				(C, s) \models \varphi
				\text{ and } (C, s) \models \psi ,\\
		(C,s) \models [X] \varphi
		& \Longleftrightarrow & 
		\forall s' \in S: \text{ if }
	(	s \cap X)= (s'  \cap X) 
		\text{then } (C,s') \models \varphi.
	\end{eqnarray*}
\end{definition}
\noindent A formula $\varphi$
of
$\mathcal{L} (\mathit{Atm})$
is said to 
be satisfiable
relative
to the class 
$\mathbf{CM}$
if there exists
a pointed classifier model 
$(C,s)$
with $C \in \mathbf{CM} $
such that $(C,s) \models \varphi$.
It is said to be valid if $\neg \phi$ is not satisfiable relative to $\mathbf{CM}$ and noted as $\models_{\mathbf{CM}} \phi$.

We can think of a pointed model $(C, s)$ as a pair $(s, x)$ in $f$ with $f(s) = x$.
The formula $[X] \varphi$
is true
at a state $s$
if $\varphi$
is true at all states that are 
modulo-$X$ equivalent to state $s$.
It has the \emph{selectis paribus}
(SP)
(selected
things being equal)
 interpretation ``features in $X$ being equal, necessarily $\phi$ holds (under possible perturbation on
 the other features)''. 
 $[\atm_0 \setminus X]\phi$ has the standard \emph{ceteris paribus} (CP) interpretation 
``features other than $X$ being equal, necessarily $\phi$ holds (under possible perturbation of
 the  features in $X$)''.
  Notice when $X = \emptyset$, $\allins$ is
the S5 universal modality since every state is modulo-$\emptyset$ equivalent to all states, viz. $(C, s) \models \allins \phi \iff \forall s' \in S, (C, s') \models \phi$.

\section{Representation between Consistent Case Base and CM}
In this section we shall show that the language of case base can be translated into the language $\mathcal{L}(\atm)$, hence case bases can be studied by classifier models.
More precisely, a case base is consistent 
if and only if its  translation, together with the following two formulas that we abbreviate as $\mathtt{Compl}$ and $\mathtt{2Mon}$,
is satisfiable in the class $\mathbf{CM}$:
\begin{align*}
    \mathtt{Compl} =_{\textit{def}} & \bigwedge_{X \subseteq \atm_0} \someins \conj{X}{\atm_0}
\end{align*}
\vspace{-0.5cm}
\begin{align*}
    \mathtt{2Mon} =_{\textit{def}}  \bigwedge_{x \in \{0, 1\}, X \subseteq Atm_0^x, Y \subseteq Atm_0^{\overline{x}}} & \Big( \someins ( \conj{X\cup Y}{\atm_0} \wedge \takevalue{x} ) \to \\
        & \bigwedge_{\atm_0^x \supseteq X' \supseteq X, Y' \subseteq Y} \allins ( \conj{X' \cup Y'}{\atm_0} \to \takevalue{x} ) 
        \Big)
\end{align*}


According to $\mathtt{Compl}$, \emph{every possible} situation description must be satisfied by the classifier, where a situation description is a conjunction of factors (those being present $X$) and negations of factors (those being absent, $Atm_0 \setminus X$).

$\mathtt{2Mon}$ introduces a \emph{two-way monotonicity}, which is meant to implement the \emph{a fortiori} constraint: if the classifier associates a situation $s$  to an outcome $x$, then it  must assign the same outcome to every  situation $s'$ such  that both (a)  $s'$ includes all factors for $x$ that are in $s$  and (b) $s'$ does \emph{not include} factors for ${\overline x}$ that are \emph{outside of} $s$.
This formula is meant to maintain consistency with respect to the preference relation, as Definition \ref{def:pref case} indicates:
if a situation including  factors $X$ for $x$ and factors $Y$ for $\overline x$, has outcome $x$, it means that $X>Y$. Thus it cannot be that outcome ${\overline x}$ is assigned to a  situation $s'$ including both a superset $X'\supseteq X$ of factors for $x$ and a subset $Y'\subseteq Y$ of factors for ${\overline x}$. In fact, if $X>Y$, then is must be the case that also $X'>Y'$, while a decision for ${\overline x}$ entails that    $X'<Y'$.



Let $\mathbf{CM}^{prec} = \{C = (S, f) \in \mathbf{CM} : \forall s \in S, (C, s) \models \mathtt{Compl} \wedge \mathtt{2Mon} \}$, where $\mathbf{CM}^{prec}$ means the class of CMs for precedent theory. Satisfiability and validity relative to $\mathbf{CM}^{prec}$ are defined in an analogous way as $\mathbf{CM}$.

\subsection{Representation of case base for result model}
To translate a result-model case-base  $\cb^{\mathrm{res}}$ into a classifier model $(C,f)$, we need to ensure that all precedents in the case-base are satisfied by the classifier,  with regard to both their factors and their outcome. 
\begin{definition}[Translation of case base for result model]
The translation function $tr_1$
maps each case from a case base
$\cb^{\mathrm{res}}$
to a corresponding formula
in the language $\mathcal{L}(\atm)$.
It is defined as follows:
    \begin{align*}
        tr_1(s, s \cap \atm_0^x, x) =_{\textit{def}} \someins (\conj{s}{\atm_0} \wedge \takevalue{x}).
    \end{align*}
    We generalize it 
    to the entire case base $\cb^{\mathrm{res}}$
 as follows:
    \begin{align*}
        tr_1(\cb^{\mathrm{res}}) =_{\textit{def}} \bigwedge_{(s, s \cap \atm_0^x, x) \in \cb} tr(s, s \cap \atm_0^x, x).
    \end{align*}
\end{definition}

Therefore, in the result model a precedent $(s, s\cap \atm_0^x, x)$ is viewed as a situation $s$ being  classified by $f$ as $x$. 
\begin{example}[Running example] The case $(\{\pi_1,\pi_2,\delta_1\},\{\pi_1,\pi_2\}, 1\})$ would be translated as 
$\someins (\pi_1\land \pi_2 \land \delta_1 \land \neg \pi_3 \land \neg \delta_2 \land \neg \delta_3\land \takevalue{1})$, which means that $f(\pi_1, \pi_2, \delta_1) = 1$
\end{example}


\subsection{Representation for the reason model}
In  translations  for the reason model we need to capture the role of reasons. This is obtained by ensuring that for every precedent $(s, X, x)$, not the fact situation $s$ directly, but the one consisting only of reason $X$ and all $\overline{x}$-factors in $s$ (i.e. $s \cap \atm_0^{\overline{x}}$) is classified as $x$. It reflects that the precedent finds $x$-factors outside of $X$ dispensable for the outcome.

\begin{definition}[Translation of case base for reason model]
 The translation function $tr_2$
maps each case from a case base
$\cb$
to a corresponding formula
in the language $\mathcal{L}(\atm)$.
It is defined as follows:
    \begin{align*}
        tr_2(s, X, x) =_{\textit{def}} \someins (\conj{X \cup (s \cap \atm_0^{\overline{x}})}{\atm_0} \wedge \takevalue{x}).
    \end{align*}
     We generalize it 
    to the entire case base $\cb$
 as follows:
    \begin{align*}
        tr_2(\cb) =_{\textit{def}} \bigwedge_{(s, X, x) \in \cb} tr_2(s, X, x).
    \end{align*}
\end{definition}

Note that the function $tr_1$ for the result model is a  special case of the function $tr_2$ for the reason model, since $((s \cap \atm_0^x) \cup (s\cap \atm_0^{\overline{x}}) = s$

\begin{fact}
    $tr_1(s, s \cap \atm_0^x, x) = tr_2(s, s \cap \atm_0^x, x)$. 
\end{fact}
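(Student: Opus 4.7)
The plan is to unfold both translation functions on the input $(s, s \cap \atm_0^x, x)$ and observe that they produce syntactically the same formula, so the equality is essentially a set-theoretic identity inside the $\conj{\cdot}{\atm_0}$ construct.

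First I would write out the two sides. By definition, $tr_1(s, s \cap \atm_0^x, x) = \someins (\conj{s}{\atm_0} \wedge \takevalue{x})$, while applying $tr_2$ to the same triple gives $tr_2(s, s \cap \atm_0^x, x) = \someins (\conj{(s \cap \atm_0^x) \cup (s \cap \atm_0^{\overline{x}})}{\atm_0} \wedge \takevalue{x})$. Hence it suffices to prove the set identity $(s \cap \atm_0^x) \cup (s \cap \atm_0^{\overline{x}}) = s$.

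For the set identity, I would distribute intersection over union to get $(s \cap \atm_0^x) \cup (s \cap \atm_0^{\overline{x}}) = s \cap (\atm_0^x \cup \atm_0^{\overline{x}})$. Recall that by the conventions set earlier in Section~2, $\atm_0^x = \plt$ and $\atm_0^{\overline{x}} = \dfd$ (or vice versa), so $\atm_0^x \cup \atm_0^{\overline{x}} = \plt \cup \dfd = \atm_0$. Combined with $s \subseteq \atm_0$, this yields $s \cap \atm_0 = s$, as required.

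Plugging this identity back in, the index sets of the two $\conj{\cdot}{\atm_0}$ expressions coincide, so the two formulas are literally identical, and the equality $tr_1(s, s \cap \atm_0^x, x) = tr_2(s, s \cap \atm_0^x, x)$ follows. There is no real obstacle here: the only substantive observation is that $\plt$ and $\dfd$ jointly exhaust $\atm_0$, which is built into the basic definitions of Section~2. The ``hardest'' step is simply noticing that because $s$ ranges over subsets of $\atm_0$, no factor outside $\atm_0^x \cup \atm_0^{\overline{x}}$ can appear in $s$, so partitioning $s$ by its pro- and con-factors recovers $s$ itself.
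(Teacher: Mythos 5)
Your proof is correct and matches the paper's own justification: the paper likewise reduces the fact to the set identity $(s \cap \atm_0^x) \cup (s \cap \atm_0^{\overline{x}}) = s$, which it simply asserts in the sentence preceding the statement. You merely spell out that identity in more detail (via $\atm_0^x \cup \atm_0^{\overline{x}} = \plt \cup \dfd = \atm_0$ and $s \subseteq \atm_0$), which is a faithful and complete elaboration of the same argument.
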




The formulas $\mathtt{2Mon}$ and  $\mathtt{Compl}$ require that the the outcome $x$ supported by reason $X$ in a precedent is assigned to all possible cases including $X$ that do not contain additional factors against $x$. 
If both formulae are satisfiable then the case base is consistent, as stated by the following theorem.

\begin{theorem}\label{theor: CB and CM tr'}
    Let $\cb \in \mathbf{CB}$ be a case base. 
    Then, $\cb$ is consistent if and only if $tr_2(\cb) $
    is satisfiable
    in $\mathbf{CM}^{prec}$.
\end{theorem}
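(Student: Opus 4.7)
The plan is to prove both directions separately. The backward direction (satisfiability of $tr_2(\cb)$ in $\mathbf{CM}^{prec}$ implies consistency of $\cb$) is the easier half: I would argue by contraposition. Suppose $\cb$ is inconsistent, witnessed by reasons $Y \subseteq \atm_0^{x_1}$ and $Y' \subseteq \atm_0^{\overline{x_1}}$ with $Y' <_{c_1} Y$ via $c_1 = (s_1, X_1, x_1) \in \cb$ and $Y <_{c_2} Y'$ via $c_2 = (s_2, X_2, \overline{x_1}) \in \cb$. Let $C=(S,f)\in \mathbf{CM}^{prec}$ realize $tr_2(\cb)$. From $tr_2(c_1)$ there exists a state equal to $X_1 \cup (s_1 \cap \atm_0^{\overline{x_1}})$ with $f$-value $x_1$; applying $\mathtt{2Mon}$ with the superset $Y \supseteq X_1$ (of $x_1$-factors) and the subset $Y' \subseteq s_1 \cap \atm_0^{\overline{x_1}}$ (of $\overline{x_1}$-factors), I conclude $f(Y \cup Y') = x_1$. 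Symmetrically, $tr_2(c_2)$ together with $\mathtt{2Mon}$ yields $f(Y' \cup Y) = \overline{x_1}$, contradicting functionality of $f$.

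For the forward direction I would explicitly build a witness model. Take $S := 2^{\atm_0}$, which immediately validates $\mathtt{Compl}$. Define $f:S \to \val$ by declaring $f(t) = x$ whenever there exists a precedent $(s, X, x) \in \cb$ with $X \subseteq t \cap \atm_0^x$ and $t \cap \atm_0^{\overline{x}} \subseteq s \cap \atm_0^{\overline{x}}$, and $f(t) = \mathsf{?}$ otherwise. Checking $\mathtt{2Mon}$ is then immediate from the definition: if a precedent $(s,X,x)$ forces $f(t)=x$ because $X \subseteq t \cap \atm_0^x$ and $t \cap \atm_0^{\overline{x}} \subseteq s \cap \atm_0^{\overline{x}}$, then the same precedent forces $x$ on any state $t'$ whose $x$-factors form a superset of $t \cap \atm_0^x$ and whose $\overline{x}$-factors form a subset of $t \cap \atm_0^{\overline{x}}$. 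Verifying $tr_2(\cb)$ is equally direct: for each $(s,X,x)\in \cb$, the state $X \cup (s \cap \atm_0^{\overline{x}})$ is in $S$ and satisfies the forcing condition trivially via the precedent itself.

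The main obstacle is the well-definedness of $f$, i.e., proving that no state is simultaneously classified as $0$ and $1$ by the definition above. This is exactly where consistency of $\cb$ is used. If $t$ were forced to both $x$ (via some $c_1$) and $\overline{x}$ (via some $c_2$), then, writing $Y := t \cap \atm_0^x$ and $Y' := t \cap \atm_0^{\overline{x}}$ (disjoint, since $\plt$ and $\dfd$ are disjoint), Definition \ref{def:pref case} gives $Y' <_{c_1} Y$ and $Y <_{c_2} Y'$, hence $Y' <_\cb Y$ and $Y <_\cb Y'$, contradicting the assumed consistency. Once well-definedness is secured, the rest is routine bookkeeping matching Definitions \ref{def:pref case}--\ref{def:Inconsist case base} against the truth conditions of $\someins$, $\allins$, and $\conj{\cdot}{\atm_0}$.
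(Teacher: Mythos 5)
Your proof is correct and follows essentially the same route as the paper: the same canonical model construction ($S = 2^{\atm_0}$ with $f$ forced to $x$ by a matching precedent and $\mathsf{?}$ by default) for the forward direction, and the same contradiction at the state $Y \cup Y'$ (whose membership in $S$ is secured by $\mathtt{Compl}$) for the backward direction. Your explicit well-definedness check for $f$ is a step the paper leaves implicit inside its $\mathtt{2Mon}$ verification, but the underlying argument is identical.
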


In light of the theorem and the fact above, the representation of case base for result model turns to be a corollary.

\begin{corollary}
    Let $\cb^{\mathrm{res}} \in \mathbf{CB}^{\mathrm{res}}$ be a case base for the result model. 
    Then, $\cb^{\mathrm{res}}$ is consistent if and only if $tr_1(\cb^{\mathrm{res}}) $ is satisfiable in $\mathbf{CM}^{prec}$.
\end{corollary}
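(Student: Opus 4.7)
The plan is to derive this as a direct consequence of Theorem~\ref{theor: CB and CM tr'} together with the Fact, without any new semantic argument. The key observation is that a result-model case base is definitionally a special kind of case base: every $c \in \cb^{\mathrm{res}}$ has the shape $(s, s \cap \atm_0^x, x)$, i.e.\ the reason coincides with the set of pro-factors in the fact situation. Hence $\mathbf{CB}^{\mathrm{res}} \subseteq \mathbf{CB}$, and the notions of preference relation, inconsistency, and consistency from Definitions~\ref{def:pref case}--\ref{def:Inconsist case base} apply to $\cb^{\mathrm{res}}$ unchanged.

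First, I would observe that on every case appearing in $\cb^{\mathrm{res}}$, the two translation functions coincide: by the Fact,
\[
tr_1(s, s \cap \atm_0^x, x) \;=\; tr_2(s, s \cap \atm_0^x, x).
\]
Taking the conjunction over all cases of $\cb^{\mathrm{res}}$ on both sides, this lifts to the equality $tr_1(\cb^{\mathrm{res}}) = tr_2(\cb^{\mathrm{res}})$ as formulas of $\mathcal{L}(\atm)$.

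Next, since $\cb^{\mathrm{res}} \in \mathbf{CB}$, I can instantiate Theorem~\ref{theor: CB and CM tr'} on $\cb^{\mathrm{res}}$: consistency of $\cb^{\mathrm{res}}$ (as a case base in $\mathbf{CB}$) is equivalent to satisfiability of $tr_2(\cb^{\mathrm{res}})$ in $\mathbf{CM}^{prec}$. Combining this equivalence with the identity $tr_1(\cb^{\mathrm{res}}) = tr_2(\cb^{\mathrm{res}})$ obtained in the previous step yields exactly the biconditional of the corollary.

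There is no real obstacle here; the only thing worth double-checking is that the notion of consistency for $\cb^{\mathrm{res}}$ as a member of $\mathbf{CB}^{\mathrm{res}}$ is the same as its consistency viewed as a member of $\mathbf{CB}$ — which is immediate because Definitions~\ref{def:pref case}--\ref{def:Inconsist case base} make no syntactic distinction between the two classes, and the footnote in the paper already remarks that the result model is subsumed by the reason model.
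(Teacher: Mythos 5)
Your proposal is correct and matches the paper's own argument exactly: the paper likewise derives this corollary by instantiating Theorem~\ref{theor: CB and CM tr'} on $\cb^{\mathrm{res}}$ (viewed as an element of $\mathbf{CB}$) and invoking the Fact that $tr_1$ and $tr_2$ agree on result-model cases. Nothing is missing.
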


Similarly, the precedential constraint can also be represented as a corollary.

\begin{corollary}
    Let $\cb \in \mathbf{CB}$ be a consistent case base and $(s', X, x)$ a case. Updating $\cb$ with  $(s', X, x)$ meets the precedential constraint, if and only if $tr_2(\cb) \wedge tr_2(s', X, x)$ is satisfiable in $\mathbf{CM}^{prec}$.
\end{corollary}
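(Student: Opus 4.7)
The plan is to derive this corollary directly from Theorem~\ref{theor: CB and CM tr'} by viewing the update as forming a new, larger case base and then invoking the theorem on that larger case base. The only ingredients needed beyond the theorem are (i) the definition of the precedential constraint (Definition~5), which reduces the question of constraint-satisfaction to a question of consistency of $\cb \cup \{(s', X, x)\}$, and (ii) the compositionality of $tr_2$, namely that it distributes as a conjunction over the cases in its argument.

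Concretely, I would unfold the left-hand side first. By Definition~5, the update of $\cb$ with the new case $(s', X, x)$ meets the precedential constraint iff $\cb \cup \{(s', X, x)\}$ is itself consistent. Next I would apply Theorem~\ref{theor: CB and CM tr'} to this extended case base: $\cb \cup \{(s', X, x)\}$ is consistent iff $tr_2\bigl(\cb \cup \{(s', X, x)\}\bigr)$ is satisfiable in $\mathbf{CM}^{prec}$. Finally, I would observe that by the definition of $tr_2$ on a whole case base (a conjunction indexed over its elements),
\[
tr_2\bigl(\cb \cup \{(s', X, x)\}\bigr) \;=\; tr_2(\cb) \,\wedge\, tr_2(s', X, x),
\]
up to trivial reordering of conjuncts. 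Chaining these three equivalences yields the desired biconditional.

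The hypothesis that $\cb$ is already consistent plays no essential role in the derivation beyond making the left-hand side well-defined (the precedential constraint is only asked of consistent bases); it is automatically compatible with the right-hand side since satisfiability of $tr_2(\cb) \wedge tr_2(s', X, x)$ entails satisfiability of $tr_2(\cb)$, which by Theorem~\ref{theor: CB and CM tr'} gives consistency of $\cb$. There is no real obstacle here: the proof is a short chain of equivalences, and the only thing to verify is the trivial syntactic identity that $tr_2$ commutes with adjoining a single case, which is immediate from its definition as a big conjunction.
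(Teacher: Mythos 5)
Your proposal is correct and matches the paper's intended argument: the paper presents this as an immediate corollary of Theorem~\ref{theor: CB and CM tr'}, obtained exactly by unfolding the definition of the precedential constraint into consistency of $\cb \cup \{(s', X, x)\}$ and noting that $tr_2$ of the extended base is the conjunction $tr_2(\cb) \wedge tr_2(s', X, x)$. Nothing further is needed.
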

\begin{example}[Running example] Case $c_3 = (\{\pi_1, \pi_2, \delta_2\}, \{\delta_2\}, 0)$ is incompatible with the $CB^{ex}$. According to $tr_2(\cb^{ex}\cup\{c_3\})$,  $\mathtt{2Mon}$ and  $\mathtt{Compl}$,  
the fact situation $\{\pi_1, \pi_2, \delta_1\}$ should be classified both as $1$, based on $CB^{ex}$, and $0$, based on $c_3$.
\end{example}

\section{Explanations}

The representation results above pave the way to providing explanations for the outcomes of cases. 
For this purpose it is necessary to introduce the following notations.
Let $\lambda$ denote a conjunction of finitely many literals, where a literal is an atom $p$ (positive literal) or its negation $\neg p$ (negative literal).
We write $\lambda \subseteq \lambda'$, call $\lambda$ a part (subset) of $\lambda'$, if all literals in $\lambda$ also occur in $\lambda'$; and $\lambda \subset \lambda'$ if $\lambda \subseteq \lambda'$ but not $\lambda' \subseteq \lambda$.
We write $Lit(\lambda), Lit^+(\lambda), Lit^-(\lambda)$ to mean all literals, all positive literals and all negative literals in $\lambda$ respectively.
By convention $\top$ is a term of zero conjuncts.
In the glossary of Boolean classifier (function), $\lambda$ is called a \emph{term} or \emph{property} (of the instance $s$).
The set of terms
is noted $ \mathit{Term}$. 
%
A key role in our analysis is played by  the notion of a  (prime) implicant, i.e., a  (subset-minimal) term which makes a classification necessarily true.
\begin{definition}[Implicant (Imp) and prime implicant (PImp)]\label{def:PI}
We write $\imp(\lambda, x)$ to mean  that $\lambda$ is an \emph{implicant} for $x$ and define it as 
$    \imp(\lambda, x) =_{\mathit{def}}
    \allins (\lambda \to \takevalue{x}).$
We write 
$\pimp(\lambda, x)$
to mean that $\lambda$ is a \emph{prime implicant} for $x$
and define it as 
\begin{align*}
\pimp(\lambda, x) =_{\mathit{def}} 
    \allins \Big( \lambda \to \big(\takevalue{x} \wedge 
    \bigwedge_{p \in \atm(\lambda)} \langle \atm(\lambda)\setminus\{p\}\rangle \neg \takevalue{x}  \big) \Big). 
\end{align*}
\end{definition}

According to the definition, $\lambda$ being an implicant for $x$ means that any state $s$ verifying $\lambda$ is necessarily classified as $x$ (necessity); and $\lambda$ being a prime implicant for $x$ means that any proper subset of $\lambda$ is not an implicant for $x$ (minimality).\footnote{Notice that we have not fully used the expressive power of $[X] \phi $ and $\langle X \rangle \phi $ until now for minimality. The intuitve meaning of $\langle \atm(\lambda)\setminus\{p\}\rangle \neg \takevalue{x}$ in the formula is that even one variable $p$ in $\lambda$ does not keep its actual value, the classification possibly no longer be $x$. } 
Implicants explain the classifier in the sense that to know an implicant satisfied at a state is to know the classification of the state.

Intuitively, for a case base containing precedent $(s, X, x)$ to be consistent, $s$ must be incompatible with every prime implicant $\lambda$ for $\overline{x}$. To guarantee that, either $\lambda$ must have some literal $\neg p$, where $p$ is in $X$ and hence is true at $s$; or $\lambda $ must have some literal $p$, where $p \notin s \cap \atm_0^{\overline{x}}$ and hence is false at $s$.

\begin{proposition}\label{prop: CB and Imp}
    Let $\cb$ be a consistent case base and $(s, X, x) \in \cb$, and $C \in \mathbf{CM}^{prec}$ s.t. $(C, s) \models tr_2(\cb) $. Then, 
    $\forall \lambda \in \mathit{Term}, $, if $(C, s) \models \pimp(\lambda, \overline{x})$, then either $X \cap \atm(Lit^-(\lambda)) \neq \emptyset$ or $s \cap \atm_0^{\overline{x}} \nsupseteq \atm(Lit^+(\lambda)) $. 
\end{proposition}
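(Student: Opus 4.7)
The plan is to argue by contraposition. Assuming that both failure conditions hold, namely $X \cap \atm(Lit^-(\lambda)) = \emptyset$ and $\atm(Lit^+(\lambda)) \subseteq s \cap \atm_0^{\overline{x}}$, I will construct a state $t$ of $C$ at which $\lambda$ is true but whose classification is $x$ rather than $\overline{x}$, contradicting the fact that $\pimp(\lambda, \overline{x})$ — via its implicant clause $\allins(\lambda \to \takevalue{\overline{x}})$ — forces every $\lambda$-state to be classified as $\overline{x}$.

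The witness is $t = X \cup \atm(Lit^+(\lambda))$. The first step is to verify $f(t) = x$. From $(C, s) \models tr_2(\cb)$ and $(s, X, x) \in \cb$ we have $\someins(\conj{X \cup (s \cap \atm_0^{\overline{x}})}{\atm_0} \wedge \takevalue{x})$, so $\mathtt{2Mon}$ applies with this $X$ as its $X$-parameter and $s \cap \atm_0^{\overline{x}}$ as its $Y$-parameter. Assumption (b) guarantees $\atm(Lit^+(\lambda)) \subseteq \atm_0^{\overline{x}}$, so we may legally take $X' = X$ and $Y' = \atm(Lit^+(\lambda))$ in the conclusion of $\mathtt{2Mon}$; combined with $\mathtt{Compl}$, which ensures $t$ actually occurs in $S$, this yields $f(t) = x$.

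The second step is to show $(C, t) \models \lambda$. Each positive literal $p$ of $\lambda$ satisfies $p \in \atm(Lit^+(\lambda)) \subseteq t$, hence $t \models p$. Each negative literal $\neg p$ of $\lambda$ requires $p \notin t = X \cup \atm(Lit^+(\lambda))$: the first exclusion follows from assumption (a), and the second from the consistency of $\lambda$ as a term (i.e., $\atm(Lit^+(\lambda)) \cap \atm(Lit^-(\lambda)) = \emptyset$, since a conjunction of literals cannot list the same atom both positively and negatively without being unsatisfiable). Hence $t \models \lambda$, and by the implicant clause of $\pimp(\lambda, \overline{x})$ we get $f(t) = \overline{x}$, contradicting $f(t) = x$.

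The main bookkeeping subtlety is aligning the atoms of $\lambda$ with the two-sorted structure $\atm_0 = \plt \cup \dfd$ and the shape of $\mathtt{2Mon}$: assumption (b) confines all positively occurring atoms of $\lambda$ to $\atm_0^{\overline{x}}$, which is precisely what permits them to sit in the $Y'$-slot of $\mathtt{2Mon}$; assumption (a) is what allows the $X'$-slot to be taken equal to $X$ without accidentally including an atom of $\atm(Lit^-(\lambda))$ that happens to fall in $\atm_0^x$. Note that only the implicant fragment of $\pimp$ is actually exercised, so the same argument goes through with $\imp$ in place of $\pimp$; the statement is phrased for prime implicants because those are the explanatorily salient objects.
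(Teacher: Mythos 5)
Your proof is correct and follows essentially the same route as the paper's: assume both exclusion conditions fail, use $\mathtt{Compl}$ to obtain a state containing $X$ together with $\atm(Lit^+(\lambda))$ and nothing else, observe that it satisfies $\lambda$ and hence is classified $\overline{x}$ by the implicant clause of $\pimp(\lambda,\overline{x})$, while $\mathtt{2Mon}$ applied to the translated precedent $(s,X,x)$ forces classification $x$. If anything, your explicit choice of the witness $t = X \cup \atm(Lit^+(\lambda))$ is slightly more careful than the paper's appeal to ``some $s^\dagger$'' satisfying $\lambda \wedge \conj{X}{X}$, since the $\mathtt{2Mon}$ step really does require the $\overline{x}$-part of the witness to stay inside $s \cap \atm_0^{\overline{x}}$.
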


\begin{example}
    Let $C = (S, f) \in \mathbf{CM}^{prec}$ and $tr_2(\cb^{ex})$ is satisfiable in $C$. Obviously $\pi_1$ cannot be PImp for $0$, otherwise $f(s_1) = 0$, contrary to $c_1$.
    Also $\neg \delta_2 \wedge \pi_2$ cannot be PImp for $1$, otherwise $f(\{\pi_2, \delta_1, \delta_3\}) = 1$, contrary to $c_2$.
\end{example}


In XAI, people \cite{shih2018formal,DBLP:conf/ecai/DarwicheH20,ignatiev2019abduction} also focus on ``local'' (prime) implicants, namely (prime) implicants true at a given state. We adopt the definitions in \cite{ignatiev2019abduction,huang2022tractable} and express them in $\mathcal{L}(\atm)$ as follows.

\begin{definition}[Abductive explanation (AXp) and weak abductive explanation (wAXp) ]
 We write 
$\axp(\lambda, x)$
to mean that
$\lambda$ \emph{abductively explains}
the decision $x$
and define it as 
$\axp(\lambda, x) =_{\mathit{def}} 
  \lambda \wedge \pimp(\lambda, x).$
We write $\waxp(\lambda, x)$ to mean that $\lambda$ \emph{weak-abductively explains} the decision $x$ and define it as 
$  \waxp(\lambda, x) =_{\mathit{def}} \lambda \wedge \imp(\lambda, x).$
\end{definition}

The proposition below states that to be the reason (of a fact situation) is to be the positive part of some weak AXp of that situation.
Notice a reason is not always the positive part of some AXp, since reason in precedent does not in general respect minimality.
\begin{proposition}\label{prop: reason implies axp }
    Let $\cb$ be a consistent case base, $(s, X, x) \in \cb$, and $C \in \mathbf{CM}^{prec}$ s.t. $(C, s) \models tr_2(\cb) $. Then $\exists \lambda \in \mathit{Term}$ s.t. $ \atm(\mathit{Lit}^+(\lambda)) = X$ and $(C, s) \models \waxp(\lambda, x) $.
\end{proposition}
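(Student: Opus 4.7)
The plan is to produce a witness term $\lambda$ explicitly and then verify the two conjuncts of $\waxp(\lambda,x) = \lambda \wedge \allins(\lambda \to \takevalue{x})$. Define
\[
\lambda \;=_{\mathit{def}}\; \bigwedge_{p \in X} p \;\wedge\; \bigwedge_{q \in \atm_0^{\overline{x}} \setminus s} \neg q.
\]
By construction, $\atm(\mathit{Lit}^+(\lambda)) = X$, which already gives the first part of the conclusion. The key design choice is that the negative literals range only over the \emph{missing con-factors} $\atm_0^{\overline{x}} \setminus s$, not over all of $\atm_0 \setminus X$; this is exactly what will let $\mathtt{2Mon}$ do the work of implicanthood.

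Next I would verify $(C,s) \models \lambda$. Since the reason $X$ is a subset of the pro-factors in $s$, we have $X \subseteq s$, so each positive conjunct is true at $s$; each negative conjunct $\neg q$ with $q \in \atm_0^{\overline{x}} \setminus s$ is true by definition. This settles the $\lambda$-conjunct of $\waxp(\lambda,x)$.

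The core step is showing $(C,s) \models \allins(\lambda \to \takevalue{x})$. Let $s' \in S$ with $(C,s') \models \lambda$. Then $s' \cap \atm_0^x \supseteq X$ (from the positive literals) and $s' \cap \atm_0^{\overline{x}} \subseteq s \cap \atm_0^{\overline{x}}$ (from the negative literals). From $(C,s) \models tr_2(\cb)$ and $(s,X,x) \in \cb$, together with $\mathtt{Compl}$, there is a state $s^* \in S$ realising the description $\conj{X \cup (s \cap \atm_0^{\overline{x}})}{\atm_0}$ with $f(s^*)=x$. Applying $\mathtt{2Mon}$ at $s^*$ with the choices $X_0 := X$, $Y_0 := s \cap \atm_0^{\overline{x}}$, $X' := s' \cap \atm_0^x$, $Y' := s' \cap \atm_0^{\overline{x}}$, I conclude $f(s') = x$, i.e.\ $(C,s') \models \takevalue{x}$. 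Since $s'$ was arbitrary among states satisfying $\lambda$, this establishes $\imp(\lambda,x)$ at $s$.

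The main obstacle is really just recognising which $\lambda$ to pick: a naive attempt using the full ceteris-paribus term $\conj{s}{\atm_0}$ would over-specify $s$ and fail to capture that the reason $X$ alone (together with the con-factors present) already fixes the outcome; conversely, taking only the positive part $\bigwedge_{p \in X} p$ would fail to be an implicant because adding further con-factors outside $s$ could flip the outcome. The asymmetric construction above mirrors exactly the asymmetric monotonicity expressed by $\mathtt{2Mon}$, which is why the argument goes through. No appeal to the other direction of Theorem~\ref{theor: CB and CM tr'} or to consistency is actually needed here beyond having a model of $tr_2(\cb)$ inside $\mathbf{CM}^{prec}$.
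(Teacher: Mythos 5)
Your proof is correct and takes essentially the same route as the paper: the paper argues by contraposition that, by $\mathtt{2Mon}$, no state containing $X$ and no con-factors beyond those in $s$ can be classified other than $x$, which is exactly your direct verification of implicanthood. Your explicit witness $\bigwedge_{p \in X} p \wedge \bigwedge_{q \in \atm_0^{\overline{x}} \setminus s} \neg q$ is precisely the term $\conj{X}{(X \cup \atm_0^{\overline{x}}) \setminus (s \cap \atm_0^{\overline{x}})}$ that the paper itself exhibits in the subsequent proposition (your appeal to $\mathtt{Compl}$ is harmless but unnecessary, since $tr_2(s,X,x)$ already asserts the existence of the realising state).
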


In fact, we always know a weak AXp for a precedent $(s, X, x)$, which is the conjunction of all factors in $X$ and negations of all $\overline{x}$-factors that are not in $s$.

\begin{proposition}\label{prop: reason gives at least one AXp}
    Let $\cb$ be a consistent case base, $(s, X, x) \in \cb$, and $C \in \mathbf{CM}^{prec}$ s.t. $(C, s) \models tr_2(\cb) $. Then, $(C, s) \models \waxp( \conj{X}{(X \cup \atm_0^{\overline{x}}) \setminus (s \cap \atm_0^{\overline{x}})} , x)$.
\end{proposition}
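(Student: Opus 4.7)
The plan is to establish the two conjuncts of $\waxp(\lambda, x)$ separately, where $\lambda = \conj{X}{(X \cup \atm_0^{\overline{x}}) \setminus (s \cap \atm_0^{\overline{x}})}$. Unpacking this using $X \cap \atm_0^{\overline{x}} = \emptyset$, the term $\lambda$ is exactly $\bigwedge_{p \in X} p \;\wedge\; \bigwedge_{p \in \atm_0^{\overline{x}} \setminus s} \neg p$, so the positive literals are indexed by $X$ and the negative literals by the $\overline{x}$-factors absent from $s$.

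The first conjunct, $(C, s) \models \lambda$, is essentially bookkeeping about the definition of a reason. Since $X$ is the reason of the precedent $(s, X, x)$ and therefore a subset of the pro-factors occurring in $s$ (i.e.\ $X \subseteq s \cap \atm_0^x \subseteq s$), every positive literal of $\lambda$ holds at $s$; each negative literal $\neg p$ with $p \in \atm_0^{\overline{x}} \setminus s$ holds trivially because such $p$ is not in $s$.

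The substantive conjunct is $(C, s) \models \imp(\lambda, x)$, i.e.\ that every $s' \in S$ with $(C,s') \models \lambda$ satisfies $f(s') = x$. I would first extract from $(C, s) \models tr_2(\cb)$ the particular conjunct $tr_2(s, X, x) = \someins(\conj{X \cup (s \cap \atm_0^{\overline{x}})}{\atm_0} \wedge \takevalue{x})$. Then I invoke $\mathtt{2Mon}$, which holds at $s$ because $C \in \mathbf{CM}^{prec}$, instantiated with the choice $X' := X$ (noting $X \subseteq \atm_0^x$) and $Y' := s \cap \atm_0^{\overline{x}}$: its antecedent is precisely the formula just extracted, so I obtain $\allins(\conj{X'' \cup Y''}{\atm_0} \to \takevalue{x})$ for every $X'' \subseteq \atm_0^x$ with $X'' \supseteq X$ and every $Y'' \subseteq s \cap \atm_0^{\overline{x}}$.

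Finally, for an arbitrary $s' \in S$ with $(C, s') \models \lambda$, I set $X'' := s' \cap \atm_0^x$ and $Y'' := s' \cap \atm_0^{\overline{x}}$. The positive part of $\lambda$ forces $X \subseteq s'$, hence $X \subseteq X''$; the negative part forces $s' \cap (\atm_0^{\overline{x}} \setminus s) = \emptyset$, hence $Y'' \subseteq s \cap \atm_0^{\overline{x}}$. Thus the pair $(X'', Y'')$ falls within the admissible range of the $\mathtt{2Mon}$ consequent, and since $s' = X'' \cup Y''$ the universal modality $\allins$ applied at $s'$ yields $f(s') = x$. The only delicate step is this set-theoretic matching between the literal structure of $\lambda$ and the indexing sets of $\mathtt{2Mon}$; notably, $\mathtt{Compl}$ plays no role, since the witness state needed to trigger $\mathtt{2Mon}$ is already supplied by $tr_2(s, X, x)$ itself.
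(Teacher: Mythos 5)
Your proof is correct and follows essentially the same route as the paper, whose own proof is just the one-line remark that a failure of the conclusion would contradict $\mathtt{2Mon}$; you have simply spelled out that appeal in full, correctly identifying $tr_2(s,X,x)$ as the antecedent of the relevant $\mathtt{2Mon}$ instance and doing the set-theoretic matching $X''=s'\cap\atm_0^x\supseteq X$, $Y''=s'\cap\atm_0^{\overline{x}}\subseteq s\cap\atm_0^{\overline{x}}$. Your observation that $\mathtt{Compl}$ is not needed here is also accurate.
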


\begin{example}
    Let $C \in \mathbf{CM}^{prec}$ be a model of $tr_2(\cb^{ex})$. Then we have $(C, s_1) \models \waxp(\pi_1 \wedge \neg \delta_2, 1)$ and $(C, s_2) \models \waxp(\delta_2 \wedge \neg \pi_1 \wedge \neg \pi_2, 0)$.
    Notice that $(C, s_2) \models \neg \waxp(\delta_2, 0) $, because e.g. $(C, s_1) \models \delta_2 \wedge \neg \takevalue{0}$.
\end{example}





The idea of contrastive explanation is dual with abductive explanation, since it points to a minimal part of a situation whose change would falsify the current decision, and the duality between their weak versions is similar \cite{huang2022tractable}. A conjunction of literals $\lambda$ is a contrastive explanation  for outcome $x$ in situation $s$, if the following conditions are satisfied: (a) $\lambda$ is true at $s$, and $s$ has the outcome $x$, (b) if all literals in $\lambda$ were false then the outcome would be different, (c) $\lambda$ is the subset-minimal literals satisfiying (a) and (b).
A weak contrastive explanation is only based on  conditions (a) and (b). 

\begin{definition}[Contrastive explanation (CXp) and weak contrastive explanation (wCXp)]
 We write 
$\cxp(\lambda, x)$
to mean that
$\lambda$ \emph{constrastively explains}
the decision $x$
and define it as
\begin{align*}
\cxp(\lambda, x) =_{\mathit{def}}  &
  \lambda \wedge
        \langle \mathit{Atm}_0 \setminus \mathit{Atm}(\lambda) \rangle \neg \vx      \wedge 
        \bigwedge_{p \in Atm(\lambda)} [(\mathit{Atm}_0 \setminus Atm(\lambda)) \cup \{p\}] \vx. 
\end{align*}
We write $\wcxp(\lambda, x)$ to mean that $\lambda$ \emph{weak-contrastively explains} the decision $x$ and define it as 
$\wcxp(\lambda, x) =_{\mathit{def}} \lambda \wedge \takevalue{x} \wedge \langle \atm_0 \setminus \atm(\lambda) \rangle \neg \takevalue{x}.$
\end{definition}


Intuitively speaking, we can test whether $\lambda$ is a wCXp of situation $s$ having outcome $x$ by ``flipping'' its positive literals to negative, and negative to positive, and observe if the resulting state is classified differently from $x$. CXp is the subset-minimal wCXp.

Weak CXps can be used to study the preferences between reasons in a case base.
The next proposition inicates that
given a precedent $(s, X, x)$, if the absence of $Y$ at $s$ by itself \emph{alone} can weakly contrastively explain $x$, then $Y$ is ``no weaker than'' $X$ in $\cb$.

\begin{proposition}\label{prop: wCXp and update 2}
    Let $\cb$ be a consistent case base and $(s, X, x) \in \cb$, and $C \in \mathbf{CM}^{prec}$ s.t. $(C, s) \models tr_2(\cb)$. If $(C, s) \models \wcxp(\conj{\emptyset}{Y}, x) $, then it is not that $Y \cup (s \cap \atm_0^{\overline{x}}) <_{\cb} X$.
\end{proposition}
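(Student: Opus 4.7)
The plan is to argue by contradiction, starting from the assumption $Y \cup (s \cap \atm_0^{\overline{x}}) <_{\cb} X$ and deriving that the state witnessing the $\langle \atm_0 \setminus Y \rangle \neg \takevalue{x}$ conjunct of the wCXp hypothesis is actually forced to have outcome $x$ by $\mathtt{2Mon}$ applied to a suitable precedent.

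First, I would unpack both hypotheses on the semantic side. From $(C,s) \models \wcxp(\conj{\emptyset}{Y}, x)$ I extract: (i) $s \cap Y = \emptyset$, (ii) $f(s) = x$, and (iii) a witness state $s' \in S$ with $s' \cap (\atm_0 \setminus Y) = s \cap (\atm_0 \setminus Y)$ and $f(s') \neq x$. Since $s \cap Y = \emptyset$, this witness has the form $s' = s \cup Z$ for some non-empty $Z \subseteq Y$. From the contradiction hypothesis $Y \cup (s \cap \atm_0^{\overline{x}}) <_{\cb} X$, combined with Definitions \ref{def:pref case} and \ref{def:pref case base} and the fact that $X$ favors $x$ (because $(s,X,x) \in \cb$), I obtain $Y \subseteq \atm_0^{\overline{x}}$ together with a case $c' = (s'', X'', x) \in \cb$ satisfying $X'' \subseteq X$ and $Y \cup (s \cap \atm_0^{\overline{x}}) \subseteq s'' \cap \atm_0^{\overline{x}}$.

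Next I use $(C,s) \models tr_2(\cb)$, so in particular $(C,s) \models tr_2(c')$, yielding some $s^* \in S$ with $s^* = X'' \cup (s'' \cap \atm_0^{\overline{x}})$ and $f(s^*) = x$. I then apply $\mathtt{2Mon}$ to $s^*$ with the choice $X^\star := s' \cap \atm_0^x$ and $Y^\star := s' \cap \atm_0^{\overline{x}}$. Two inclusions need to be checked: since $Z \subseteq Y \subseteq \atm_0^{\overline{x}}$, we have $X^\star = (s \cup Z) \cap \atm_0^x = s \cap \atm_0^x \supseteq X \supseteq X''$; and $Y^\star = (s \cap \atm_0^{\overline{x}}) \cup Z \subseteq (s \cap \atm_0^{\overline{x}}) \cup Y \subseteq s'' \cap \atm_0^{\overline{x}}$ by the assumed preference. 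Then $\mathtt{2Mon}$ (together with $\mathtt{Compl}$ guaranteeing the state exists in $S$) forces $f(X^\star \cup Y^\star) = x$. Observing that $X^\star \cup Y^\star = s \cup Z = s'$ gives $f(s') = x$, contradicting (iii).

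The proof is essentially a set-theoretic verification; the only delicate point is to pick the right parameters $X^\star, Y^\star$ for the instance of $\mathtt{2Mon}$ and to reconstruct $s'$ as their union. The main obstacle I anticipate is bookkeeping: tracking that $Z$ lies entirely on the $\overline{x}$-side so it contributes only to $Y^\star$, and using $X \subseteq s \cap \atm_0^x$ (which comes from $(s,X,x)$ being a case) to bridge $X'' \subseteq X$ with $X'' \subseteq s \cap \atm_0^x$. Once these inclusions are in place, $\mathtt{2Mon}$ closes the argument mechanically.
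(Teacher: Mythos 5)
Your proof is correct and follows essentially the same route as the paper's: unpack the $\wcxp$ hypothesis to get a witness $s' = s \cup Z$ with $f(s') \neq x$, extract from the assumed preference a precedent $c'$ whose $tr_2$-translation yields a state classified as $x$, and then use $\mathtt{2Mon}$ (with $\mathtt{Compl}$) to force $f(s') = x$, a contradiction. The only difference is that you apply $\mathtt{2Mon}$ once directly from $s^*$ to $s'$, whereas the paper routes through the intermediate state $X \cup Y \cup (s \cap \atm_0^{\overline{x}})$; your inclusion checks are the bookkeeping the paper leaves implicit.
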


\begin{example}
    Let $C \in \mathbf{CM}^{prec}$ be a model of $tr_2(\cb^{ex})$. Since $\{\delta_3\} <_{\cb^{ex}} \{\pi_1\} $, we have $(C, s_2) \models \wcxp(\conj{\emptyset}{\{ \pi_1 \}}, 0)$. Indeed $f(\pi_1, \delta_1, \delta_3) = 0$ by $\mathtt{2Mon}$ according to $s_1$.
\end{example}

\section{Related work and conclusion}
In this paper, we have shown that through the concept of classifier in \cite{LiuLorini2021BCL} a novel logical model of factor-based reasoning can be provided, which allows a rigorous analysis of case bases and of the inferences they support.

As noted in the introduction, our work is based upon the case-based reasoning models of HYPO and CATO \cite{Ashley1990ML,Aleven2003UB} and upon the analysis of precedential constraint by Jeff Horty \cite{Horty2011RR,HortyBench-Capon2012FB}. Further approaches exist that make use of logic in reasoning with cases. For instance, \cite{prakken1998modelling} procided a factor-based model based on formal defeasible argumentation. More recently \cite{zheng2020case,zheng2020precedent} represent precedents as propositional formulas and compare precedents by (propositional) logical entailment.

However, they do not fully use the power of logic,
in the sense that a proof theory (axiomatics) for reasoning with precedents is not provided.
By contrast, besides the semantic framework presented here, we can 
make syntactic derivations of properties of CBR using the axiomatics of $\bcl$ (see in Appendix).

Moreover, our representation results allow for exploring different notions of explanation, such as abductive and contrastive explanations. We can accordingly explain why a case-based reasoning suggests a particular  outcome (rather then a different one) in a new case. Thus, out model could be used to build explainable case-based reasoners, which could also be deployed to reproduce and analyse the functioning of opaque predictors of the outcome of cases.
Thus, CBR is brought into a broader context of classifier systems. Thus, we connect three lines of research: legal case-based reasoning,  AI\&Law approaches on  to explanation \cite{ATKINSON2020103387},  techniques and results developed in the context of XAI.

In future work we will deepen the relation between classifiers, explanations, and reasoning with legal precedents.  Interesting developments pertain to  addressing analogical reasoning beyond the a fortiori constraint considered here and to deploying ideas of explanation to extract knowledge out of cases (e.g., to determine the direction of factors and the way in which they interact).

\bibliographystyle{plain}
\bibliography{CMandCBR}

\newpage
\appendix

\section{Proofs}
\subsection{Proof of Theorem \ref{theor: CB and CM tr'}}
\begin{proof}
    uppose $\cb$ is consistent. We construct a classifier model $C = (S, f)$ s.t. $S = 2^{\atm_0}$, and $\forall s \in S$, we have
    \begin{align*}
        f(s) = \left\{ \
        \begin{array}{lll}
           x \text{ for } x \in \{0, 1\}, & \text{ if } \exists (s', X, x) \in \cb  \text{ s.t. } s \cap \atm_0^x \supseteq X \text{ and } s \cap \atm_0^{\overline{x}} \subseteq s' \cap \atm_0^{\overline{x}}; \\
            ? & \text{ otherwise.}
        \end{array}
        \right.
    \end{align*}
    Obviously $(C, s) \models \mathtt{Compl}$ since $S = 2^{\atm_0}$. We need show that $(C, s) \models \mathtt{2Mon}$. Suppose the opposite towards a contradiction. 
    W.l.o.g., suppose $\exists s = X \cup Y \in S, f(s) = 0$, where $X \subseteq \dfd, Y \subseteq \plt$ and $\exists s' = X' \cup Y'$ s.t. $X' \supseteq X, Y' \subseteq Y$ but $f(s') = 1$.
    By the construction of $f$, since $f(s) = 0, \exists c_0 = (s_0, X_0, 0) \in \cb$ s.t. $X \supseteq X_0$ and $Y \subseteq s_0 \cap \plt$.
    Similarly, $\exists c_1 = (s_1, X_1, 1) \in \cb$ s.t. $ Y' \supseteq X_1 $ and $X' \subseteq s_1 \cap \dfd$. By transitivity of $\subseteq$, one can check that $X <_{c_1} Y$ and $Y <_{c_0} X$, which contradicts that $\cb$ is consistent.
    
    For the other direction, suppose $\cb$ is inconsistent, we show that $tr_2(\cb) \wedge \mathtt{Compl} \wedge \mathtt{2Mon} $ is unsatisfiable.
    Since $\cb$ is inconsistent, by definition we shall have $Y_0 <_{\cb} Y_1$ and $Y_1 <_{\cb} Y_0$.
    W.l.o.g., assume in $\cb$ there are two precedents $c_0 = (s_0, X_0, 0), c_1 = (s_1, X_1, 1)$ s.t. $Y_0 <_{c_1} Y_1, Y_1 <_{c_0} Y_0$.
    Unravel the definition we have $Y_0 \subseteq s_1 \cap \dfd $ and $X_1 \subseteq Y_1$; $Y_1 \subseteq s_0 \cap \plt$ and $X_0 \subseteq Y_0$.
    
    Now towards a contradiction suppose $(C, s)$ be a pointed CM s.t. $(C, s) \models tr_2(\cb) \wedge \mathtt{Compl} \wedge \mathtt{2Mon}$. Consider the state $s_2 = Y_0 \cup Y_1$. Since $(C, s) \models \mathtt{Compl}$ we always have $s_2 \in S$. Then by $\mathtt{2Mon}$, we have $f(s_2) = 0$ with respect to $s_0$, since $s_2 \cap \dfd = Y_0 \supseteq X_0 \supseteq s_0 \cap \dfd$ and $s_2 \cap \plt = Y_1 \subseteq s_0 \cap \plt$. But also by $\mathtt{2Mon}$ we have $f(s_2) = 1$ with respect to $s_1$. Hence $f$ fails to be functional, a contradiction that we want.
\end{proof}

\subsection{Proof of Proposition \ref{prop: CB and Imp}}
\begin{proof}
    Suppose towards a contradiction that $\exists \lambda, (C, s') \models \pimp(\lambda, \overline{x})$, $X \cap \atm(Lit^-(\lambda)) = \emptyset$ and $s \cap \atm_0^{\overline{x}} \supseteq \atm(Lit^+(\lambda)) $. Then $\lambda \wedge \conj{X}{X}$ is consistent. By $\mathtt{Compl}$ we have some $s^\dagger \in S, (C, s^\dagger) \models \lambda \wedge \conj{X}{X}$ and $f(s^\dagger) = \overline{x}$ since $\lambda$ is a PImp. However, by virtue of $\mathtt{2Mon}$ according to $s$ we shall have $f(s^\dagger) = x$, a contradiction that we want.
\end{proof}

\subsection{Proof of Proposition \ref{prop: reason implies axp }}
\begin{proof}
    If it were no such $\lambda$, then we would have some $s' \in S$ s.t. $X \subseteq s'$, $s' \cap \atm_0^{\overline{x}} \subseteq s \cap \atm_0^{\overline{x}}$ and $f(s') \neq x$. However, this contradicts $\mathtt{2Mon}$.
    Notice that if the classifier is trivial, i.e. $\forall s' \in S, f(s') = x$, then we have $(C, s) \models \axp(\top, x)$ and by definition of term, $X \supseteq \atm(\mathit{Lit}(\top)) = \emptyset$.
\end{proof}

\subsection{Proof of Proposition \ref{prop: reason gives at least one AXp}}
\begin{proof}
    It is easy to see that if it were not the case, then $\mathtt{2Mon}$ would not be true in $C$.
\end{proof}

\subsection{Proof of Proposition \ref{prop: wCXp and update 2}}
\begin{proof}
By the antecedent we have $f(s \cup Y') \neq x$ for some $Y' \subseteq Y$. Now suppose $Y \cup (s \cap \atm_0^{\overline{x}}) <_{\cb} X$, then we should have $f(X \cup Y \cup (s \cap \atm_0^{\overline{x}})) = x$. However, thus by $\mathtt{2Mon}$ it would be $f(s \cup Y') = x$, a contradiction.
\end{proof}

\section{Axiomatics}
We present the axiomatics of $\bcl$, which is proven sound and complete relative to $\mathbf{CM}$ in \cite{LiuLorini2021BCL}.
\begin{definition}[Axiomatics of $\bcl$]\label{axiomatics}
We define  $\bcl$
(Binary Classifier Logic)
to be the extension of classical
propositional logic given by the following
 axioms and rules of inference:
\begin{align}
& \big( [\emptyset] \varphi
\wedge [\emptyset] (\varphi \rightarrow \psi) \big)
\rightarrow [\emptyset] \psi
 \tagLabel{K$_{[\emptyset]}$}{ax:Kbox}\\
 & [\emptyset] \varphi
\rightarrow  \varphi
 \tagLabel{T$_{[\emptyset]}$}{ax:Tbox}\\
  & [\emptyset] \varphi
\rightarrow  [\emptyset][\emptyset] \varphi
 \tagLabel{4$_{[\emptyset]}$}{ax:4box}\\
   & \varphi
\rightarrow  [\emptyset] \langle \emptyset \rangle \varphi
 \tagLabel{B$_{[\emptyset]}$}{ax:Bbox}\\
   &  [X ]\varphi \leftrightarrow 
\bigwedge_{Y \subseteq X } \big(  \conj{Y }{X}
\rightarrow  [\emptyset ]( \conj{Y }{X} \rightarrow \varphi)  \big)
 \tagLabel{Red$_{[\emptyset]}$}{ax:Redbox}\\
 &\bigvee_{x \in \mathit{Val}}\vx
 \tagLabel{AtLeast}{ax:Leastx}\\
 & \vx \to \neg \vy \text{ if }x \neq y
 \tagLabel{AtMost}{ax:Mosttx}\\
& \bigwedge_{ Y \finsubseteq \atm_0}\Big(
\big(\conj{Y }{\atm_0} \wedge \vx \big) \rightarrow 
   [\emptyset]\big(\conj{Y }{\atm_0} \rightarrow
\vx \big) \Big)
 \tagLabel{Funct}{ax:functX}\\
& \frac{\phi \to \psi,  \hspace{0.25cm} \phi}{\psi}
\tagLabel{MP}{rule:MP}\\
&  \frac{\varphi }{[\emptyset] \varphi }
 \tagLabel{Nec$_{[\emptyset]}$}{rule:Necbox}
\end{align}
\end{definition}

We write $\vdash_\bcl \phi$ to mean that $\phi$ is a $\bcl$ theorem, that is, $\phi$ is derivable from axioms and rules of inferences of $\bcl$.

We show the proposition below proof-theoretically to exercise the axiomatics $\bcl$.
\begin{proposition}\label{prop: case has Imp }
    We have the following validity
    \begin{align*}
        \models_{\mathbf{CM}} tr_2(s, X, x) \to \bigvee_{\lambda \in \mathit{Term}} \big(\imp(\lambda, x) \wedge (\lambda \to \conj{X}{X}) \big).
    \end{align*}
\end{proposition}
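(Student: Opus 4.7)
The plan is to identify a canonical witness term $\lambda$ that makes the disjunction true, namely the full conjunctive description of the state whose existence is asserted by $tr_2(s, X, x)$. Let $Y = X \cup (s \cap \atm_0^{\overline{x}})$ and set $\lambda^* = \conj{Y}{\atm_0}$, which is the conjunction of all $p \in Y$ together with $\neg p$ for every $p \in \atm_0 \setminus Y$. Unfolding definitions, $tr_2(s, X, x)$ is literally $\someins(\lambda^* \wedge \takevalue{x})$, so $\lambda^*$ is the natural candidate.

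The first step is to verify the ``implicant'' conjunct $\imp(\lambda^*, x) = \allins(\lambda^* \to \takevalue{x})$ by appealing to the functionality axiom \eqref{ax:functX}. Instantiated at $Y$ and $x$, it reads $(\conj{Y}{\atm_0} \wedge \takevalue{x}) \to \allins(\conj{Y}{\atm_0} \to \takevalue{x})$, i.e.\ $(\lambda^* \wedge \takevalue{x}) \to \imp(\lambda^*, x)$. Next I would push this under the $\someins$ in the hypothesis: using the K-rule and necessitation for $\allins$ (equivalently, monotonicity of $\someins$), from $\someins(\lambda^* \wedge \takevalue{x})$ one derives $\someins \imp(\lambda^*, x)$, which is $\someins \allins(\lambda^* \to \takevalue{x})$. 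Then S5 for $\allins$ (axioms \eqref{ax:Tbox}, \eqref{ax:4box}, \eqref{ax:Bbox}) gives the standard collapse $\someins \allins \varphi \to \allins \varphi$, yielding $\imp(\lambda^*, x)$ outright.

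The second conjunct $\lambda^* \to \conj{X}{X}$ is a pure propositional tautology: since $X \subseteq Y$, every positive literal $p$ appearing in $\conj{X}{X} = \bigwedge_{p \in X} p$ already occurs as a positive conjunct of $\lambda^*$. Combining the two conjuncts under the existential witness $\lambda^*$ for the disjunction (recall that $\mathit{Term}$ is defined to contain all such finite conjunctions of literals, including $\lambda^*$) yields the consequent of the implication, completing the derivation.

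The only non-routine point is the move from $\someins \imp(\lambda^*, x)$ to $\imp(\lambda^*, x)$; everything else is a direct instantiation of \eqref{ax:functX} and propositional reasoning. I expect this S5-collapse step to be the one readers will want spelled out, since it is where the universal-modality flavour of $\allins$ is used essentially. Note that no finiteness or structural constraint on $s$ or $X$ is invoked beyond those built into $\atm_0$, so the argument uniformly covers both reason-model and result-model translations.
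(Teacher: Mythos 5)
Your proof is correct and follows essentially the same route as the paper: the same witness term $\conj{X \cup (s \cap \atm_0^{\overline{x}})}{\atm_0}$, the same purely propositional justification of the conjunct $\lambda \to \conj{X}{X}$ via $X \subseteq X \cup (s \cap \atm_0^{\overline{x}})$, and the implicant conjunct obtained from the key lemma $\someins(\conj{Y}{\atm_0} \wedge \takevalue{x}) \to \allins(\conj{Y}{\atm_0} \to \takevalue{x})$. The only (minor) divergence is in how that lemma is established: you get it directly from \eqref{ax:functX}, monotonicity of $\someins$, and the S5 collapse $\someins\allins\varphi \to \allins\varphi$, whereas the paper takes a longer detour through \eqref{ax:Leastx} and \eqref{ax:Mosttx}; both derivations are sound.
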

\begin{proof}
    We prove by deriving $\vdash_\bcl tr_2(s, X, x) \to \bigvee_{\lambda \in \mathit{Term}} \big( \imp(\lambda, x) \wedge (\lambda \to \conj{X}{X} \big).$ For readability we write $s^x_X$ for $X \cup (s \cap \atm_0^{\overline{x}})$.
\begin{enumerate}
    \item $\vdash_\bcl \bigwedge_{\lambda \in \mathit{Term}} \neg (\imp(\lambda, x) \to \conj{X}{X}) \to \neg (\imp(\conj{s^x_X}{\atm_0}, x) \wedge ( \conj{s^x_X}{\atm_0} \to \conj{X}{X}) $ \\
    \text{ \small by the fact that $\conj{s^x_X}{\atm_0}$ is a term}
    \item $\vdash_\bcl \neg (\imp(\conj{s^x_X}{\atm_0}, x) \wedge ( \conj{s^x_X}{\atm_0} \to \conj{X}{X} ) \to \neg \imp(\conj{s^x_X}{\atm_0}, x)$\\
    \text{ \small by the fact that $\conj{s^x_X}{\atm_0} \to \conj{X}{X}$ }
    \item $\vdash_\bcl \neg \imp(\conj{s^x_X}{\atm_0}, x) \to \neg \allins (\conj{s^x_X}{\atm_0} \to \takevalue{x}) $\\
    \text{ \small by definition of Imp}
    \item $\vdash_\bcl tr_2(s, X, x) \to \allins (\conj{s^x_X}{\atm_0} \to \takevalue{x})$ \\
    \text{ \small by definition of $tr_2(s, X, x)$, $\mathbf{Funct}, \mathbf{MP}$ and a theorem proven below}
    \item $\vdash_\bcl (tr_2(s, X, x) \wedge \neg \imp(\conj{s^x_X}{\atm_0}, x)) \to \bot  $\\
    \text{ \small from 3, 4 by propositional logic }
    \item $\vdash_\bcl tr_2(s, X, x) \to \bigvee_{\lambda \in \mathit{Term}} \big( \imp(\lambda, x) \wedge (\lambda \to \conj{X}{X} \big)$ \\
    \text{ \small from 2, 5 by propositional logic}
\end{enumerate}
The theorem used in 4 is $\vdash_\bcl \someins (\conj{Y}{\atm_0} \wedge \takevalue{x}) \to \allins (\conj{Y}{\atm_0} \to \takevalue{x}) $
and is derived as follows.
\begin{enumerate}
    \item $\vdash_\bcl \takevalue{y} \to \neg \takevalue{x} $ for $x \neq y$
    \item $\vdash_\bcl (\neg \conj{Y}{\atm_0} \vee \takevalue{y}) \to (\neg \conj{Y}{\atm_0} \vee \neg \takevalue{x} ) $ \\
    \text{ \small by propositional logic}
    \item $\vdash_\bcl \allins (\neg \conj{Y}{\atm_0} \vee \takevalue{y}) \to \allins (\neg \conj{Y}{\atm_0} \vee \neg \takevalue{x} ) $ \\
    \text{ \small from 1, 2 by $\mathbf{Nec}, \mathbf{K}$ and $\mathbf{MP}$ }
    \item $\vdash_\bcl \allins( \conj{Y}{\atm_0} \to \takevalue{y} ) \to \allins (\conj{Y}{\atm_0} \to \neg \takevalue{x} )$
    \item $\vdash_\bcl \bigvee_{x' \in \val \setminus \{x\}} \allins (\conj{Y}{\atm_0} \to \takevalue{x'}) \to \allins(\conj{Y}{\atm_0} \to \neg \takevalue{x}) $
    \item $\vdash_\bcl \someins (\conj{Y}{\atm_0} \wedge \takevalue{x}) \to \bigwedge_{x' \in \val\setminus\{x\}} \someins (\conj{Y}{\atm_0} \wedge \neg \takevalue{x'}) $ \\
    \text{ \small by countraposition of 5}
    \item $\vdash_\bcl \bigwedge_{x' \in \val \setminus \{x\} } (\someins (\conj{Y}{\atm_0} \wedge \neg \takevalue{x}) \to \allins (\conj{Y}{\atm_0} \to \neg \takevalue{x'}) $ \\
    \text{ \small by propositional logic and countraposition of $\mathbf{Funct}$}
    \item $\vdash_\bcl \someins (\conj{Y}{\atm_0} \wedge \takevalue{x} ) \to \bigwedge_{x'\in \val \setminus \{x\}} \allins(\conj{Y}{\atm_0} \to \neg \takevalue{x'}) $ \\
    \text{ \small from 6, 7 by $\mathbf{MP}$ }
    \item $\vdash_\bcl \bigwedge_{x'\in \val \setminus \{x\}} \allins(\conj{Y}{\atm_0} \to \neg \takevalue{x'}) \leftrightarrow \allins (\conj{Y}{\atm_0} \to \takevalue{x})  $ \\
    \text{ \small by $\mathbf{AtLeast, AtMost, Nec}$, $\mathbf{K}$ and propositional logic }
    \item $\vdash_\bcl \someins (\conj{Y}{\atm_0} \wedge \takevalue{x}) \to \allins (\conj{Y}{\atm_0} \to \takevalue{x}) $ \\
    \text{ \small from 8, 9 by $\mathbf{MP}$ }
\end{enumerate}
\end{proof}

\end{document}